\documentclass[letterpaper, 10 pt, conference]{ieeeconf} 
\IEEEoverridecommandlockouts      
\usepackage{amsmath,amssymb}
\usepackage{graphicx}
\usepackage{xcolor}
\usepackage{dsfont}
\usepackage{mathtools}
\usepackage{mathabx}
\usepackage{hyperref}  %

\usepackage{amsthm}

\usepackage{booktabs} %
\usepackage{acronym}
\usepackage{algorithm}
\usepackage{algpseudocode}

\usepackage{multirow}
\usepackage{todonotes}
\makeatletter
\let\NAT@parse\undefined
\makeatother

\newcommand{\RR}{\mathbb{R}}
\newcommand{\TT}{\mathbb{T}}
\newcommand{\PP}{\mathbb{P}}
\newcommand{\qj}{\boldsymbol{\theta}}           %
\newcommand{\dqj}{\dot{\qj}}       %
\newcommand{\tauj}{\boldsymbol{\tau}}           %
\newcommand{\obs}{\boldsymbol{o}}      %

\newcommand{\ys}{\boldsymbol{y}}       %
\newcommand{\ysMeas}{\tilde{\ys}}     %

\newcommand{\net}{\boldsymbol{\phi}}             %
\newcommand{\wht}{\boldsymbol{\psi}}             %
\newcommand{\netwht}{\net_{\wht}}     %
\newcommand{\hs}{\boldsymbol{h}}        %

\newcommand{\bel}{p}               %
\newcommand{\belPri}{\bar{\bel}}   %
\newcommand{\belPos}{\bel^+}       %
\newcommand{\belOpt}{\bel^\star}   %
\newcommand{\Idx}{\mathds{1}}      %
\newcommand{\KL}{\mathrm{KL}}      %
\newcommand{\mean}{\boldsymbol{\mu}}     %
\newcommand{\cov}{\boldsymbol{\Sigma}}   %
\newcommand{\secm}{\boldsymbol{M}}    %

\newcommand{\X}{\boldsymbol{X}}      %
\newcommand{\Xse}{\boldsymbol{\chi}}     %
\newcommand{\bxi}{\boldsymbol{\xi}} %
\newcommand{\Q}{\boldsymbol{Q}} %
\newcommand{\A}{\boldsymbol{A}} %
\newcommand{\N}{\boldsymbol{N}} %

\newcommand{\eout}{\boldsymbol{e}}      %
\newcommand{\vMeas}{\boldsymbol{\nu}}
\newcommand{\epsInt}{\boldsymbol{\epsilon}}
\newcommand{\vect}[3]{\prescript{\mathrm{#3}}{}{\boldsymbol{#1}}_{\mathrm{#2}}}
\newcommand{\dvect}[3]{\prescript{\mathrm{#3}}{}{\dot{\boldsymbol{#1}}}_{\mathrm{#2}}}
\newcommand{\rotM}[2]{\prescript{\mathrm{#2}}{\mathrm{#1}}{\boldsymbol{R}}}
\newcommand{\drotM}[2]{\prescript{\mathrm{#2}}{\mathrm{#1}}{\dot{\boldsymbol{R}}}}
\newcommand{\pos}[2]{\vect{p}{#1}{#2}}
\newcommand{\vel}[2]{\vect{v}{#1}{#2}}
\newcommand{\dpos}[2]{\dvect{p}{#1}{#2}}
\newcommand{\dvel}[2]{\dvect{v}{#1}{#2}}
\newcommand{\velMeas}[2]{\prescript{\mathrm{#2}}{}{\tilde{\boldsymbol{v}}}_{\mathrm{#1}}}
\newcommand{\SO}{\mathrm{SO}}
\newcommand{\SE}{\mathrm{SE}}
\newcommand{\aIMU}{\prescript{\mathrm{I}}{}{\tilde{\boldsymbol{a}}}}
\newcommand{\gIMU}{\prescript{\mathrm{I}}{}{\tilde{\boldsymbol{\omega}}}}
\newcommand{\aBody}{\prescript{\mathrm{I}}{}{\boldsymbol{a}}}
\newcommand{\gBody}{\prescript{\mathrm{I}}{}{\boldsymbol{\omega}}}

\newcommand{\bias}{\boldsymbol{b}}
\newcommand{\dse}{\boldsymbol{d}}

\newtheorem{assumption}{Assumption}
\newtheorem{theorem}{Theorem}
\newtheorem{remark}{Remark}

\acrodef{IMU}[IMU]{Inertial Measurement Unit}
\acrodef{OOD}[OOD]{Out-Of-Distribution}
\acrodef{EKF}[EKF]{Extended Kalman Filter}
\acrodef{IEKF}[IEKF]{Invariant Extended Kalman Filter}
\acrodef{UQ}[UQ]{Uncertainty Quantification}
\acrodef{KL}[KL]{Kullback-Leibler}
\acrodef{GML}[GML]{Gaussian Maximum Likelihood}
\acrodef{GRU}[GRU]{Gated Recurrent Unit}

\allowdisplaybreaks
\renewcommand{\baselinestretch}{0.97}
\title{\LARGE \bf Proprioceptive-only State Estimation for Legged Robots \\with Set-Coverage Measurements of Learned Dynamics
}

\author{Abhijeet M. Kulkarni$^{1}$, Ioannis Poulakakis$^{2}$ and Guoquan Huang$^{1}$
\thanks{\scriptsize$^{1}$Department of Mechanical Engineering, University of Delaware, Newark, DE, USA
        {\tt\small \{amkulk, ghuang\}@udel.edu}}%
\thanks{\scriptsize$^{2}$Robotics Institute, Athena Research Center, Marousi, Greece; School of Mechanical Engineering, National Technical University of Athens, Greece; HERON--Center of Excellence in Robotics, Athens, Greece {\tt\small poulakas@mail.ntua.gr, i.poulakakis@athenarc.gr}. }}

\begin{document}

\maketitle

\begin{abstract}
Proprioceptive-only state estimation is attractive for legged robots since it is computationally cheaper and is unaffected by perceptually degraded conditions. 
The history of joint-level measurements contains rich information that can be used to infer the dynamics of the system and subsequently produce navigational measurements. 
Recent approaches produce these estimates with learned measurement models and fuse with IMU data, under a Gaussian noise assumption.
However, this assumption can easily break down with limited training data and render the estimates inconsistent and potentially divergent. In this work, we propose a proprioceptive-only state estimation framework for legged robots that characterizes the measurement noise using set-coverage statements that do not assume any distribution. 
We develop a practical and computationally inexpensive method to use these set-coverage measurements with a Gaussian filter in a systematic way. 
We validate the approach in both simulation and two real-world quadrupedal datasets. 
Comparison with the Gaussian baselines shows that our proposed method remains consistent and is not prone to drift under real noise scenarios.

\end{abstract}

\section{Introduction and Related Work}

Quadrupedal robots have matured into reliable field platforms, demonstrating impressive robustness in challenging, unstructured environments. This capability makes them compelling candidates for missions of subterranean exploration~\cite{Miller2020RAL}, operation in natural environments~\cite{Miki2022SciRob}, and industrial inspection~\cite{Halder2023JBE}. In such domains, accurate state estimation is not merely a supporting module---it is a prerequisite for long-term autonomous operation. While exteroceptive sensors, like LiDAR, cameras, and radar, can provide highly informative measurements for accurate state estimation, they are also the most vulnerable to environmental degradation~\cite{Weisheng2026TITS}, such as dust, smoke, darkness, specularities, vegetation, and poor texture, leading to intermittent or biased perception and ultimately estimator failure. In contrast, proprioceptive sensing with \ac{IMU} and joint-level measurements is unaffected by these preceptually degraded conditions and is routinely relied upon for robust, high-rate control~\cite{Hwangbo2019SciRob}. This motivates {\em proprioceptive-only} state estimation pipelines that can maintain reliable tracking  when exteroception becomes unreliable or unavailable.

Classical proprioceptive state estimation fuses \ac{IMU} data with leg kinematics using either filtering methods~\cite{Bloesch2012RSS, Hartley2020IJRR} or factor-graph formulations~\cite{Wisth2019RAL,Noh2025arXiv}. These approaches can be effective when foot contacts are accurately detected and modeled; however, performance may deteriorate under imperfect contact. A key observation is that, during locomotion, the robot's whole-body motion and contact dynamics under the action of its low-level controller imprint rich structure onto time histories of joint angles, velocities, and torques. This structure has been exploited to extract latent states for learned dynamics models used as predictive components in planning pipelines~\cite{Roth2025RSS, Kulkarni2025Arxiv}. Analogously, a \emph{window} of proprioceptive measurements can be exploited to infer body motion. Recent learning-based approaches pursue this direction by training neural networks to extract motion-relevant patterns from proprioceptive histories and produce pseudo-measurements---such as body-frame velocity or relative displacement---that can be fused within a filter~\cite{Buchanan2021ICLR, Wasserman2024CoRL, Youm2025ICRA, Lee2025IROS}. In this view, the network functions as a virtual sensor, whose prediction error is modeled as measurement noise and fused with \ac{IMU} information in the estimation pipeline.
\begin{figure}[t]
    \centering
    \includegraphics[width=0.95\linewidth]{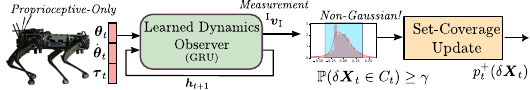}
    \caption{Proposed method uses \emph{set-coverage} measurements for update.}
    \label{fig:overview}
\end{figure}

For computational compatibility with efficient \ac{EKF} frameworks, these virtual sensors are typically modeled as corrupted by zero-mean Gaussian noise. The corresponding covariances are either fixed~\cite{Youm2025ICRA} or predicted jointly with the mean~\cite{Buchanan2021ICLR}. In practice, such networks are commonly trained with extensive data and adopt \ac{GML}~\cite{Russell2022TNNLS} to capture \emph{aleatoric} uncertainty via a predicted covariance. However, collecting large-scale, representative training data is expensive, and simulation-trained models often require careful domain randomization to mitigate sim-to-real mismatch. In the resulting limited-data regime, \emph{epistemic} uncertainty can dominate~\cite{TagasovskaNIPS2019,Gawlikowski2021AIR}, yielding prediction errors that are biased or non-Gaussian and poorly characterized by the learned aleatoric covariance~\cite{Jimenez2026Arxiv}. When such miscalibrated pseudo-measurements are fused as if they were Gaussian, the filter can become overconfident and may diverge.

A number of estimators relax the Gaussian assumption, but they tend to trade off modeling fidelity against deployability. Robust Kalman filtering variants can attenuate the impact of occasional outliers~\cite{DuranMartin2024PMLR}, yet they are less effective under persistent biases or distribution shifts. Set-membership and related bounded-error methods can handle such systematic deviations~\cite{Combastel2016ARC, Li2022RAL}, but they typically return state sets rather than point estimates, and the set representation can grow in complexity over time. Distributionally robust formulations~\cite{Wang2022TSP} provide another alternative by optimizing against an ambiguity set over noise distributions, but they require specifying that ambiguity set around a nominal likelihood and are often realized via particle filtering, which can be computationally costly. In contrast, onboard state estimation demands a fast, lightweight recursive update that still returns a point estimate while remaining reliable under non-Gaussian pseudo-measurement errors whole likelihood is not known.

This paper addresses the above failure mode by introducing a state estimation framework that replaces the untrue Gaussian noise assumption with a \emph{set-coverage} uncertainty representation for learned legged dynamics (as pseudo-measurements), see Fig.~\ref{fig:overview}. 
Specifically, we characterize pseudo-measurement error using calibrated sets that provide probabilistic coverage---i.e., with a prescribed probability, the true error lies inside the set. Such statements can be obtained via modern \ac{UQ} tools \cite{Wang2026TPAMI}, including conformal prediction~\cite{Oliveira2024JMLR} and scenario optimization~\cite{Mirasierra2021CSL}, and are distribution free. We then develop a principled method to incorporate these set-coverage constraints into a recursive filtering pipeline.
In particular, the main contributions of this paper include:
\begin{itemize}
    \item We propose a novel proprioceptive-only state estimation framework that models learned dynamics-based pseudo-measurement uncertainty via calibrated {set-coverage} statements, without the untrue Gaussian assumption.
    \item We develop a practical method to efficiently fuse coverage-constrained pseudo-measurements with the state estimate of a recursive filter.
    Our implementation is shown to be real-time 
     (e.g., $0.07\,\mathrm{ms}$ per update on a laptop CPU), suitable for high-rate onboard estimation.
    \item The proposed approach is validated extensively in Monte-Carlo simulations and on real-world quadruped experiments, showing improved robustness under non-Gaussian error behavior and preventing divergence cases observed with a Gaussian baseline (EKF) and competitive performance with proprioception baseline.
\end{itemize}

\section{Background: Legged State Estimation with Invariant Filtering}

In this section, we briefly describe the state estimation for legged robots in the \ac{IEKF} framework~\cite{Hartley2020IJRR}, which serves as the basis for our proposed method.
Specifically, we seek to continuously estimate the robot's base frame $\{I\}$ navigation state relative to a global frame $\{G\}$ using  \ac{IMU} and joint-level proprioceptive data. 
This navigation state typically comprises the orientation $\rotM{I}{G} \in \SO(3)$, velocity $\vel{I}{G} \in \RR^3$, and position $\pos{I}{G} \in \RR^3$.

\subsection{Preliminaries on $\SE_2(3)$}
The robot's navigation state is modeled as an element of the matrix Lie group $\SE_2(3)$. An element $\Xse \in \SE_2(3)$ packs the orientation, velocity, and position into a single matrix representation:
\begin{equation}
    \Xse = 
    \begin{bsmallmatrix}
        \rotM{I}{G} & \vel{I}{G} & \pos{I}{G} \\
        \boldsymbol{0}_{1 \times 3} & 1 & 0 \\
        \boldsymbol{0}_{1 \times 3} & 0 & 1 
    \end{bsmallmatrix}.
\end{equation}
The associated Lie algebra is $\mathfrak{se}_2(3)$. The hat operator $(\cdot)^\wedge : \RR^9 \rightarrow \mathfrak{se}_2(3)$ maps a vector $\mathbf{v} \in \RR^9$ to the Lie algebra, with the inverse mapping provided by the vee operator $(\cdot)^\vee : \mathfrak{se}_2(3) \rightarrow \RR^9$. The matrix exponential map $\exp(\cdot): \mathfrak{se}_2(3) \rightarrow \SE_2(3)$ maps the algebra to the group. For small perturbations $\mathbf{v}$, the exponential map admits the first-order approximation:
\begin{equation}\label{eq:lie_exp_map}
    \exp\!\left(\mathbf{v}^\wedge\right) \approx \mathbf{I} + \mathbf{v}^\wedge.
\end{equation}

A \emph{right-invariant} error between the estimated state $\widebar{\Xse}$ and the true state $\Xse$:
\begin{equation}\label{eq:right_inv_error}
    \boldsymbol{\eta} \triangleq \widebar{\Xse}\,\Xse^{-1} \in \SE_2(3).
\end{equation}
This invariant error is parameterized in the Lie algebra as 
\begin{equation}\label{eq:inv_error_parametrization}
\boldsymbol{\eta} = \exp\!\left(\bxi^\wedge\right), \quad \text{where}~ \bxi \in \RR^9
\end{equation}

\subsection{State Representation and IMU Propagation} \label{sec:state_propogation}
We augment the navigation state with \ac{IMU} biases $\bias_t\in\RR^6$:
\begin{equation*}\label{eq:setup_state_input}
\X_t=(\Xse_t,\bias_t), \quad \bias_t=\begin{bmatrix}\boldsymbol{b}_a^\top & \boldsymbol{b}_g^\top\end{bmatrix}^\top.
\end{equation*}
Using the standard \ac{IMU} model with acceleration $\aBody$ and angular rate $\gBody$ both in $\{I\}$ the measurement corrupted with bias and Gaussian noises $\boldsymbol{n}_a, \boldsymbol{n}_g$:
\begin{equation*}
\aIMU = \aBody + \boldsymbol{b}_a + \boldsymbol{n}_a,\qquad \gIMU = \gBody + \boldsymbol{b}_g + \boldsymbol{n}_g,
\end{equation*}
And the continuous-time IMU kinematics is given by:
\begin{align}\label{eq:estimator_dynamics}
\drotM{I}{G} &= \rotM{I}{G}\,[\gIMU-\boldsymbol{b}_g-\boldsymbol{n}_g]_\times, \nonumber\\
\dvel{I}{G} &= \mathbf{g}^G + \rotM{I}{G}\,(\aIMU-\boldsymbol{b}_a-\boldsymbol{n}_a), \\
\dpos{I}{G} &= \vel{I}{G},~~\dot{\boldsymbol{b}}_\mathrm{a}=\boldsymbol{n}_{ba},~~ \dot{\boldsymbol{b}}_g=\boldsymbol{n}_{bg} \nonumber
\end{align}
We collect the noise as $\boldsymbol{w}_t=[\boldsymbol{n}_a^\top~\boldsymbol{n}_g^\top~\boldsymbol{n}_{ba}^\top~\boldsymbol{n}_{bg}^\top]^\top\in\RR^{12}$, $\boldsymbol{w}_t\sim\mathcal{N}(\boldsymbol{0},\Q)$ and $\mathbf{g}^G$ is the gravity vector.

Define the augmented error state $\delta\X_t=[\bxi_t^\top~\delta\bias_t^\top]^\top\in\RR^{15}$, where $\bxi_t$ parameterizes the right-invariant error in~\eqref{eq:inv_error_parametrization} and $\delta\bias_t=\widebar{\bias}_t-\bias_t$. 
We relate the true and estimate with:
\begin{equation}\label{eq:composition_operator}
\X_t=\widebar{\X}_t\boxplus\delta\X_t \triangleq
\left(\exp\!\left(-\bxi_t^\wedge\right)\widebar{\Xse}_t,\ \widebar{\bias}_t-\delta\bias_t\right),
\end{equation}
where $\boxplus$ is the compositional operator, see \cite{Hartley2020IJRR},
and assume $\delta\X_t\sim\mathcal{N}(\boldsymbol{0},\boldsymbol{\Sigma}_t)$, i.e.,
\begin{equation}\label{eq:gaussian_composition}
\X_t \sim \widebar{\X}_t \boxplus \mathcal{N}(\boldsymbol{0}, \boldsymbol{\Sigma}_t).
\end{equation}
Linearizing the error dynamics yields
\begin{equation}\label{eq:continuous_error_dynamics}
\frac{\mathrm{d}}{\mathrm{d}t}\delta \X_t = \A_t\delta\X_t + \N_t\boldsymbol{w}_t,
\end{equation}
with $\A_t$ and $\N_t$ as in~\cite{Hartley2020IJRR}. 
Over $\Delta t$, the mean is propagated by integrating the nominal dynamics~\eqref{eq:estimator_dynamics} with zero noise and the covariance is propagated as:
\begin{equation}\label{eq:covariance_propagation}
\boldsymbol{\Sigma}_{t+\Delta t}=\boldsymbol{\Phi}\boldsymbol{\Sigma}_t\boldsymbol{\Phi}^\top+\mathbf{Q}_d.
\end{equation}
where 
$\boldsymbol{\Phi}=\exp(\A_t\Delta t)$ %
and $\mathbf{Q}_d\approx \boldsymbol{\Phi}\N_t\Q\N_t^\top\boldsymbol{\Phi}^\top\,\Delta t$.

\subsection{Invariant Measurement Output}
Following the invariant observer design, we define a right-invariant output $\ys_t$ with a known constant vector $\dse$:
\begin{align} \label{eq:meas_model}
    \ys_t &=  \Xse_t^{-1} \dse,\nonumber \\
    &=
    \begin{bsmallmatrix}        (\rotM{I}{G})^\top & -\rotM{I}{G}^\top \vel{I}{G} & -\rotM{I}{G}^\top \pos{I}{G}\\        \boldsymbol{0} & 1 & 0  \\        \boldsymbol{0} & 0 & 1  \\    \end{bsmallmatrix}
    \begin{bsmallmatrix}        \boldsymbol{0}_{3 \times 1} \\ -1 \\ 0    \end{bsmallmatrix}
    = 
    \begin{bsmallmatrix}        \vel{I}{I} \\ -1 \\ 0    \end{bsmallmatrix}.
\end{align}
where $\rotM{I}{G}^\top \vel{I}{G} = \vel{I}{I}$ is the body-frame velocity, and the last two components are constants of the homogeneous structure of the invariant output. The measurements are assumed to be corrupted by an additive noise $\vMeas_t$ such that the complete measurement model is 
\begin{equation} \label{eq:setup_meas_model}
    \ys_t =  \Xse_t^{-1} \dse + \vMeas_t
\end{equation}

In the next, we  discuss our approach to obtain the body-frame velocity $\vel{I}{I}$ and characterization of the noise $\vMeas_t$.

\section{Modeling Learned Legged Dynamics}
\label{sec:learned_measurement}

Proprioceptive signals do not directly measure navigational quantities such as the body-frame velocity $\vel{I}{I}$. However, legged locomotion induces strong dynamical couplings between joint kinematics/torques, and base motion. These couplings can be exploited to observe the system's latent state $\boldsymbol{h}$, allowing for the construction of pseudo-measurements of $\vel{I}{I}$ from short histories of proprioceptive data. In section, we learn a dynamical observer to generate these pseudo-measurements and address how to represent their prediction uncertainty for robust recursive estimation.

\subsection{Proprioceptive Observation Model}
A typical quadrupedal robot features 12 joints across its four legs, with three joints per leg (two at the hip and one at the knee). Accordingly, the joint-level proprioceptive observation at time $t$ is defined as $\obs_t \coloneqq (\qj_t, \dqj_t, \tauj_t) \in \TT^{12}\times \RR^{12} \times \RR^{12}$, representing joint angles, velocities, and torques, respectively.
Motivated by prior efforts on learning dynamical observers from short proprioceptive windows~\cite{Roth2025RSS,Kulkarni2025Arxiv}, we feed a finite proprioceptive history to a learned observer that also outputs an instantaneous pseudo-measurement of the body-frame velocity.

To model this observer, after evaluating a set of popular lightweight architectures (including MLP, RNN, LSTM, BiGRU, and TCN) on our limited training data, we employ a \ac{GRU} network as it achieved the best performance.
Let $\netwht$ denote the learned dynamical observer with parameters $\wht$ and latent state $\hs_t\in\RR^{h}$. The observer updates its latent state and outputs a pseudo-measurement of the body-frame velocity ${\velMeas{I}{I}}_t\in\RR^3$ as follows:
\begin{equation}
\label{eq:setup_rnn}
    (\hs_t,{\velMeas{I}{I}}_t) = \net_{\wht}\!\left(\obs_t,\hs_{t-\Delta t}\right)
\end{equation}
The prediction error is given by:
\begin{equation}
    \eout_t = {\velMeas{I}{I}}_t - {\vel{I}{I}}_t
    \Rightarrow \vMeas_t = \ysMeas_t - \ys_t
    = \begin{bsmallmatrix} \eout_t \\ 0 \\ 0\end{bsmallmatrix},
\end{equation}
which is also treated as the measurement noise in the right-invariant output~\eqref{eq:setup_meas_model}.

\subsection{Supervised Training}
\label{sec:training}
We train the dynamics observer network $\netwht$ using supervised trajectories with ground-truth body-frame velocities:
\begin{equation}
\label{eq:training_dataset}
\mathcal{D}_{\mathrm{train}}
=
\left\{
\left\{
\big(\obs_{k\Delta t}^{(i)}, {\vel{I}{I}}_{k\Delta t}^{(i)}\big)
\right\}_{k=0}^{T}
\right\}_{i=1}^{N_{\mathrm{train}}},
\end{equation}
where $i$ indexes trajectories, each of length $T$, and ${\vel{I}{I}}_{k\Delta t}^{(i)}$ is obtained from motion capture or a high-accuracy reference estimator. The observer latent state is set to zero at the start of each trajectory, $\hs_{0}=\boldsymbol{0}$, and parameters $\wht$ can be learned with MSE loss or \ac{GML} loss \cite{Russell2022TNNLS}, that can predict covariances. 
We use a single GRU layer, with hidden size $h=64$, train on sequences of length $T=1000$ at $\Delta t=10$\,ms, 
with the following two datasets:\footnote{Note that we here intended to choose a small network primarily to avoid overfitting to the limited data.}

\subsubsection{Vision60 dataset}\label{sec:vision_dataset_training}
This is our own dataset and the ground truth is provided by a motion-capture system. From the available sequences (Tab.~\ref{tab:ghost_dataset_results}), we train on a single trajectory to keep the supervised training protocol controlled while ensuring coverage of common locomotion regimes. Specifically, we use \textsc{3CC3CCW}, which contains repeated segments of straight walking as well as left and right turning (clockwise and counterclockwise loops), and thus exposes the observer to typical gait transitions within one continuous run. The sequence lasts $108$\,s and is $\approx 75$\,m long.

\subsubsection{Spot dataset}
This dataset~\cite{Noh2025arXiv} has the ground-truth velocity provided by a high-precision perception-based reference estimator. To mirror the Vision60 setup, we likewise train on a single representative sequence and use the remaining data for evaluation. Concretely, we train on the \textsc{Upstair} sequence, which includes sustained locomotion with direction changes under the provided reference estimates.

\subsection{Gaussianity Fails From Training to Testing}
\label{sec:gaussian_limitation}

\begin{figure}
    \centering
    \includegraphics{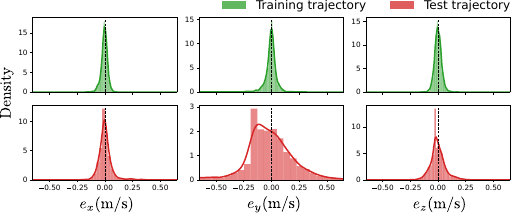}
    \caption{Prediction error $\mathbf{e}_t$ of the learned velocity predictor for Vision60 on the training trajectory (green) versus a test trajectory (red).}
    \label{fig:errors_motivation}
\end{figure}

A standard modeling assumption in learned-measurement pipelines is to treat the prediction error as a zero-mean Gaussian process:
$\mathbf{e}_t \sim \mathcal{N}(\mathbf{0}, \mathbf{R}_t)$,
which facilitates fusion with IMU data via EKF. 
This approximation is often valid when the error distribution is unbiased and unimodal, to achieve this performance typically a large, representative training datasets is required. 
As shown in Fig.~\ref{fig:errors_motivation}, the Vision60 network's error on the training data closely approximates a Gaussian distribution, where aleatoric uncertainty can be effectively captured by GML-based covariance prediction.

However, in realistic field deployments where training data is limited, epistemic uncertainty becomes dominant when the model encounters out-of-training distribution data~\cite{TagasovskaNIPS2019, Gawlikowski2021AIR}. 
In such cases, the induced error may exhibit varying multi-modality, skewness, or heavy tails—characteristics not captured by a fixed parametric noise model. 
Fig.~\ref{fig:errors_motivation} illustrates this phenomenon: when the network trained on a single sequence (Section~\ref{sec:vision_dataset_training}) is evaluated on a test trajectory, the prediction errors {\em deviate significantly from Gaussianity}, especially in the forward $y$-direction of motion.

\subsection{Error-State Coverage Statements}
\label{sec:uq_coverage}

To overcome the above limitations of Gaussian noise models, we characterize the learned predictor errors using \emph{set-coverage statements} \cite[Section 9.3.2]{casella2002statistical}, which constrain only the {\em probability mass} assigned to a calibrated set and are therefore agnostic to the  unknown (or hard-to-model) true {\em probability distribution} of the errors. 
Concretely, for the velocity prediction error $\eout_t$, we assume the following bound $\epsInt\in\RR^3_{\ge 0}$ such that
\begin{equation}
\label{eq:coverage_statement}
    \PP\!\left(|\eout_t| \leq \epsInt\right) \geq \gamma,
\end{equation}
where $|\cdot|$ denotes element-wise absolute value and $\gamma\in(0,1)$ is the desired confidence level (probability). Fig.~\ref{fig:coverage_validity} illustrates an example with $\gamma=0.85$, where the same coverage set remains compatible with both in-distribution (training) and shifted (test) errors.

Note that the coverage bounds~\eqref{eq:coverage_statement} can be obtained using distribution-free \ac{UQ} procedures such as conformal prediction \cite{Shafer2008JMLR} or scenario-optimization-based calibration \cite{Mirasierra2021CSL}. 
Because these coverage statements can be computed \emph{post-hoc}, they apply to any frozen predictor~\cite{Shafer2008JMLR} and avoid costly retraining when the deployment environment shifts, while capturing both aleatoric and epistemic uncertainty~\cite{Sale2025PMLR}. 
If the training data poorly represent calibration/deployment conditions, the resulting $\epsInt$ may become conservative. 
We will detail the specific calibration procedure used in our experiments in Section~\ref{sec:calibration_and_coverage_experiments}.

\begin{remark}\label{rem:weakness_of_coverage}
    A coverage statement is weaker than explicit likelihood. It constrains only the probability mass assigned to a set and leaves the distribution inside and outside that set unspecified. This makes the framework flexible enough to accommodate non-Gaussian, multimodal, and heavy-tailed errors distributions. This is a core advantage and enables the use of coverage guarantees without requiring restrictive modeling assumptions on the measurement noise.
\end{remark}

\begin{figure}
    \centering
    \includegraphics[width=\columnwidth]{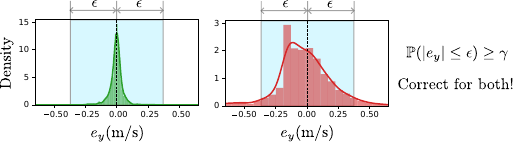}
    \caption{A single calibrated coverage statement can remain valid for both training (green) and test (red) error distributions.}
    \label{fig:coverage_validity}
\end{figure}

To the best of our knowledge, we are the {\em first} to utilize this flexible set-coverage statement~\eqref{eq:coverage_statement} to model learned legged dynamics, rather than using an ad-hoc Gaussian model.
Specifically, we express the calibrated velocity-error coverage statement directly in the augmented error state $\delta\X_t$~\eqref{eq:composition_operator}. 
Using the measurement model \eqref{eq:setup_meas_model} together with $\vMeas_t=[\eout_t^\top~0~0]^\top$. We define the projection 
$\Pi \coloneqq \begin{bsmallmatrix}\boldsymbol{I}_{3\times 3} & \boldsymbol{0}_{3\times 2}\end{bsmallmatrix}$, 
so that the coverage set $|\eout_t|\le \epsInt$ is equivalently
\begin{equation}
    \left|\Pi\!\left(\Xse_t^{-1}\dse-\ysMeas_t\right)\right|\le \epsInt.
\end{equation}
With the right-invariant error \eqref{eq:right_inv_error}, substituting $\Xse_t^{-1}=\widebar{\Xse}_t^{-1}\boldsymbol{\eta}_t$ in above and using first-order approximation \eqref{eq:lie_exp_map}, we have
\begin{equation}
    \Pi\!\left(\Xse_t^{-1}\dse-\ysMeas_t\right)
    \approx
    \Pi\!\left(\widebar{\Xse}_t^{-1}\dse-\ysMeas_t\right)
    + \boldsymbol{H}\,\delta\X_t,
\end{equation}
where $\boldsymbol{H}\in\RR^{3\times 15}$ is the linear map
$\boldsymbol{H}\delta\X_t=\Pi\,\widebar{\Xse}_t^{-1}\bxi_t^\wedge\dse$ (the bias components do not enter). Hence the coverage constraint induces the error-state feasible set:
\begin{align}
    C_t &= \left\{ \delta \X_t \in \RR^{15} : \boldsymbol{l}_t \;\leq\; \boldsymbol{H} \delta \X_t \;\leq\; \boldsymbol{u}_t \right\}
    \label{eq:setup_feasible_set_lie_linear_augmented}\\
    \boldsymbol{l}_t &= -\Pi \left( \widebar{\Xse}_t^{-1} \dse - \ysMeas_t \right) - \epsInt\\
    \boldsymbol{u}_t &= -\Pi \left( \widebar{\Xse}_t^{-1} \dse - \ysMeas_t \right) + \epsInt
\end{align}
Clearly, \eqref{eq:coverage_statement} is equivalent to the error-state coverage statement (see \cite[Thm. 9.2.2]{casella2002statistical}):
\begin{equation}\label{eq:coverage_constraint_error_state}
    \PP(\delta \X_t \in C_t) \geq \gamma.
\end{equation}

In the next, we will discuss how the error-state coverage statement is used to update the current state estimate. 
Once the posterior $\delta \X_t^+ \sim \mathcal{N}(\mean_t^+, \boldsymbol{P}_t^+)$ satisfying \eqref{eq:coverage_constraint_error_state} is obtained, the on-manifold update is $\X_t^+ = \widebar{\X}_t \boxplus \delta \X_t^+$.

\section{Our Proprioceptive-only State Estimator} \label{sec:proposed_update_method}

In this section, we present how to rigorously incorporate the set-coverage statement~\eqref{eq:coverage_constraint_error_state} into 
our proposed proprioceptive-only state estimator in the \ac{IEKF} framework.
As the \ac{IMU} propagation is standard as in Section~\ref{sec:state_propogation}, 
in the following we focus on the coverage measurement update and its practical efficient implementation.

\subsection{Coverage-Constrained Update}

After the \ac{IEKF} propagation with IMU (see Section~\ref{sec:state_propogation}), 
we have the following prior Gaussian  estimate  at time $t$:
\begin{equation}
    \belPri_t(\delta\X) \coloneqq \mathcal{N}(\delta\X;\widebar{\mean}_t,\widebar{\cov}_t),
\end{equation}
with $\widebar{\mean}_t=0$ (kept for the generality).
We seek to find the posterior distribution $\belPos(\delta\X)$ that:
(i) assigns at least $\gamma$ probability mass to $C_t$, and
(ii) deviates minimally from $\belPri_t$ in \ac{KL} divergence, as in minimum cross-entropy
\cite{Shore1981TIT} and posterior regularization \cite{Ganchev2007NIPS} approaches: 
\begin{align}
\belOpt_t \coloneqq \arg\min_{\bel}\quad &
\KL\!\left(\bel\,\|\,\belPri_t\right)   \label{eq:kl_projection_error_state}\\
\text{s.t.}\quad &
\int_{C_t} \bel(\delta \X)\,\mathrm{d}(\delta \X) \ge \gamma,\nonumber\\
&\int \bel(\delta \X)\,\mathrm{d}(\delta \X)=1, &\bel(\delta \X)\ge 0 \nonumber
\end{align}
Note that the objective $\KL(\bel\|\belPri_t)$ is strictly convex in $\bel$, and the constraints are linear in $\bel$,
hence \eqref{eq:kl_projection_error_state} is a convex optimization problem \cite{Ganchev2007NIPS} and assumes a unique minimizer. 
The following result shows that enforcing the coverage constraint admits a closed-form  update solution:

\begin{theorem}[KL-minimal posterior with a set-mass constraint]
\label{thm:kl_projection_error_state}
Assume $\belPri_t(\delta \X)>0$ almost everywhere on its support. Let $C_t$ be measurable, $\gamma\in(0,1)$, and let
\begin{equation}\label{eq:pi_def_error_state}
\pi_t \coloneqq \int_{C_t} \belPri_t(\delta \X)\,\mathrm{d}(\delta \X).
\end{equation}
If $\pi_t\ge \gamma$, then the unique minimizer of \eqref{eq:kl_projection_error_state} is $\belOpt_t=\belPri_t$.
If $\pi_t<\gamma$ and $0<\pi_t<1$, then the unique minimizer is
\begin{equation}
\label{eq:posterior_piecewise}
\belOpt_t
=
\frac{\gamma}{\pi_t}\belPri_t\,\Idx_{C_t}(\delta \X)
+
\frac{1-\gamma}{1-\pi_t}\belPri_t\,\Idx_{(C_t)^c}(\delta \X).
\end{equation}
\end{theorem}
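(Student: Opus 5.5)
The plan is to reduce the infinite-dimensional problem \eqref{eq:kl_projection_error_state} to a two-parameter problem in the masses that $\bel$ assigns to $C_t$ and to $(C_t)^c$, and then solve that scalar problem explicitly.

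First, any feasible $\bel$ with $\KL(\bel\|\belPri_t)<\infty$ is absolutely continuous with respect to $\belPri_t$. Write $q\coloneqq\int_{C_t}\bel$, so that $q\ge\gamma$. Apply the chain rule (data-processing decomposition) of the \ac{KL} divergence to the partition $\{C_t,(C_t)^c\}$:
\begin{equation*}
\KL(\bel\|\belPri_t)=q\log\frac{q}{\pi_t}+(1-q)\log\frac{1-q}{1-\pi_t}+q\,\KL\big(\bel_{C}\|\belPri_{C}\big)+(1-q)\,\KL\big(\bel_{C^c}\|\belPri_{C^c}\big).
\end{equation*}
Here $\bel_C$ and $\belPri_C$ denote the normalized restrictions to $C_t$, and similarly for the complement. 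This follows by splitting the integral $\int \bel\log(\bel/\belPri_t)$ over the two pieces and factoring out the masses. The last two terms are nonnegative and vanish iff $\bel_C=\belPri_C$ and $\bel_{C^c}=\belPri_{C^c}$ almost everywhere.

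The conditional terms can always be set to zero without affecting feasibility, since feasibility depends only on $q$. The optimum is therefore attained by
\begin{equation*}
\bel=\frac{q}{\pi_t}\belPri_t\Idx_{C_t}+\frac{1-q}{1-\pi_t}\belPri_t\Idx_{(C_t)^c},
\end{equation*}
with $q$ minimizing the binary KL $f(q)=q\log(q/\pi_t)+(1-q)\log((1-q)/(1-\pi_t))$ over $q\in[\gamma,1]$. Uniqueness of the conditionals comes from the equality case of Gibbs' inequality, using $\belPri_t>0$ a.e. so that the conditional densities are well defined.

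For the scalar problem, $f$ is strictly convex on $[0,1]$ with $f'(q)=\log\frac{q(1-\pi_t)}{\pi_t(1-q)}$, which vanishes exactly at $q=\pi_t$, where $f$ attains its global minimum $0$. There are two cases.
\begin{itemize}
\item If $\pi_t\ge\gamma$, then $q=\pi_t$ is feasible, and the minimizer collapses to $\bel=\belPri_t$ with KL equal to $0$.
\item If $\pi_t<\gamma$, then $f$ is strictly increasing on $[\gamma,1]$, so the unique minimizer is $q=\gamma$. This yields \eqref{eq:posterior_piecewise}.
\end{itemize}
The hypothesis $0<\pi_t<1$ is what makes the normalizations $\gamma/\pi_t$ and $(1-\gamma)/(1-\pi_t)$ finite and the conditionals well defined. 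Edge cases where $\pi_t\in\{0,1\}$ are excluded by assumption.

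The main obstacle is making the decomposition step rigorous, namely the a.e. equality claims and the handling of infinite KL and the measurability of $C_t$, rather than the scalar optimization. I would handle it directly via Gibbs' inequality on each piece, avoiding any Lagrangian or duality argument.
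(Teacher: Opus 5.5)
Your proof is correct, and it takes a different route from the paper's. The paper argues variationally. For $\pi_t\ge\gamma$ it notes that $\bar p_t$ is feasible with zero divergence. For $\pi_t<\gamma$ it declares the coverage constraint active, introduces multipliers $\lambda>0$ and $\nu$, and sets the first variational derivative of the Lagrangian to zero. This forces the exponential-tilt form $p^\star_t=\bar p_t\exp(\lambda\mathbf{1}_{C_t}-\nu-1)$, i.e.\ a rescaling $k_1\bar p_t$ on $C_t$ and $k_2\bar p_t$ on $C_t^c$. The constants then come from $k_1\pi_t=\gamma$ and $k_2(1-\pi_t)=1-\gamma$, and uniqueness is attributed to strict convexity.

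You instead use the exact chain-rule decomposition of $\mathrm{KL}$ over the partition $\{C_t,C_t^c\}$. This separates a binary divergence $f(q)$, which alone governs feasibility, from two nonnegative conditional terms, and leaves a one-dimensional strictly convex problem.

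What your route buys is rigor at low cost. It needs no Lagrange multipliers or functional derivatives in infinite dimensions, and it shows rather than asserts that the constraint is active. It also disposes of infinite-divergence candidates. Existence and almost-everywhere uniqueness fall out of the equality case of Gibbs' inequality, since both weights $\gamma$ and $1-\gamma$ are positive. It also makes explicit that the projection only reweights the two-cell marginal and keeps the prior conditionals, which mirrors the paper's later lifting step.

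The paper's route is the standard I-projection/posterior-regularization template. It extends unchanged to several or more general linear moment constraints, where no finite-partition reduction exists. To be fully rigorous it needs the remark that, for this convex problem, the KKT point with $\lambda=\log\frac{\gamma(1-\pi_t)}{\pi_t(1-\gamma)}>0$ is sufficient.

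Two small inaccuracies in your write-up. First, $\pi_t=1$ is not excluded in the first branch of the theorem, but there Gibbs' inequality alone makes $\bar p_t$ the unique minimizer. Second, positivity of $\bar p_t$ is not what makes your conditionals well defined; $0<\pi_t<1$ already suffices.
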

\begin{proof}
See Appendix~\ref{app:proof}.
\end{proof}

This result implies a simple update rule. 
If the prior already meets the coverage requirement
($\pi_t\ge\gamma$), the constraint is inactive and no modification is required: 
$\belPos_t=\belPri_t$, hence $\mean_t^+=\widebar{\mean}_t$ and $\cov_t^+=\widebar{\cov}_t$.
If instead $\pi_t<\gamma$, the KL-minimal projection rescales the prior inside and outside $C_t$ so that
$\PP_{\belOpt_t}(\delta\X\in C_t)=\gamma$. Since the resulting $\belOpt_t$ is generally non-Gaussian, we
restore a Gaussian representation by moment matching for recursive Guassian estimation.\footnote{Moment matching is equivalent to reverse-direction KL minimization $q^\star=\arg\min_{q\in\mathcal{G}}\KL(q\|\belOpt_t)$, where $\mathcal{G}$ is the family of Gaussians.}

\subsection{Gaussian Moment Matching}
Our goal is to construct a Gaussian posterior $\belPos_t$ whose first two moments match those of the optimal posterior $\belOpt_t$.
To this end, define the prior second moment as $
\secm_t \coloneqq \widebar{\cov}_t+\widebar{\mean}_t\widebar{\mean}_t^\top .
$ Then the truncated moments of the prior on $C_t$ are:
\begin{align}\label{eq:trunc_moments_error_state}
    \mean_{C_t} &\coloneqq \frac{1}{\pi_t}\int_{C_t} \delta \X\, \belPri_t(\delta \X)\, \mathrm{d}(\delta \X),\nonumber\\
    \secm_{C_t} &\coloneqq \frac{1}{\pi_t}\int_{C_t} \delta \X\delta \X^\top\, \belPri_t(\delta \X)\, \mathrm{d}(\delta \X).
\end{align}
The corresponding moments on the complement follow from the law of total expectation:
\begin{equation}
\label{eq:complement_moments}
    \mean_{C_t^c}
    = \frac{\widebar{\mean}_t-\pi_t\mean_{C_t}}{1-\pi_t},\qquad
    \secm_{C_t^c}
    = \frac{\secm_t-\pi_t\secm_{C_t}}{1-\pi_t}.
\end{equation}
Finally, the moment-matched Gaussian posterior $\belPos_t(\delta \X)=\mathcal{N}(\delta\X;\mean_t^+,\cov_t^+)$ is obtained as
\begin{align}\label{eq:moment_matching}
\mean_t^+ &= \gamma \mean_{C_t} + (1-\gamma)\mean_{C_t^c},\\
\secm_t^+ &= \gamma \secm_{C_t} + (1-\gamma)\secm_{C_t^c},\\
\cov_t^+ &= \secm_t^+ - \mean_t^+(\mean_t^+)^\top.
\end{align}

\begin{figure}
    \centering
    \includegraphics{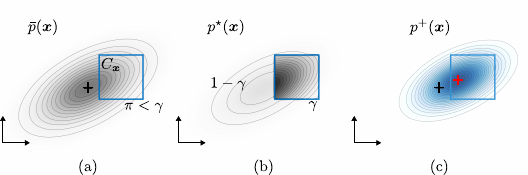}
    \caption{Illustration of the KL-minimal belief update with a coverage constraint. (a) Prior with $\pi_t<\gamma$.
    (b) Exact KL-optimal posterior: piecewise rescaling of the prior inside/outside $C_t$.
    (c) Gaussian approximation obtained by moment matching.}
    \vspace{-8pt}
    \label{fig:update_illustration}
\end{figure}
Fig.~\ref{fig:update_illustration} illustrates the update process from obtaining optimal posterior to its moment-matched Gaussian. 
The main computational cost is evaluating
$\pi_t$ in \eqref{eq:pi_def_error_state} and the truncated moments in \eqref{eq:trunc_moments_error_state},
which require integrating a multivariate Gaussian over the set $C_t$ in $\RR^{15}$ (see \eqref{eq:setup_feasible_set_lie_linear_augmented}).
Next we exploit the structure of $C_t$ to reduce the integration dimension and then use numerical methods~\cite{Genz2016SIP} to compute these integrals.

\subsection{Reduced-Dimensional Implementation} %
\label{sec:reduced_dim_update}

Note that for \eqref{eq:setup_feasible_set_lie_linear_augmented}, the constraint depends on $\delta\X_t$ only through
$\boldsymbol{z}_t \coloneqq \boldsymbol{H}\delta\X_t \in \RR^{3}$, with component-wise bounds
$\boldsymbol{l}_t \le \boldsymbol{z}_t \le \boldsymbol{u}_t$.
Thus we can perform the expensive probability and truncated-moment computations in the output space $\RR^{3}$.
Under the Gaussian prior $\delta\X_t\sim\mathcal{N}(\widebar{\mean}_t,\widebar{\cov}_t)$, the projected prior on
$\boldsymbol{z}_t$ is given by:
\begin{equation}\label{eq:output_space_prior}
\boldsymbol{z}_t \sim \mathcal{N}(\widebar{\mean}_{z,t}, \widebar{\cov}_{z,t})
= \mathcal{N}\!\Bigl(\boldsymbol{H}\widebar{\mean}_t,\; \boldsymbol{H}\widebar{\cov}_t\boldsymbol{H}^\top\Bigr).
\end{equation}
The coverage set becomes a axis-aligned box:
\begin{equation}
C_t^{z} \coloneqq \{\boldsymbol{z}\in\RR^{3}\mid \boldsymbol{l}_t \le \boldsymbol{z} \le \boldsymbol{u}_t\}.
\end{equation}
We apply Theorem~\ref{thm:kl_projection_error_state} and the moment-matching update \eqref{eq:moment_matching}
directly in $\boldsymbol{z}$-space to obtain an updated marginal
$\boldsymbol{z}_t \sim \mathcal{N}(\mean_{z,t}^+,\cov_{z,t}^+)$, where $\pi_t=\PP(\boldsymbol{z}_t\in C_t^z)$ on the prior.

To recover a Gaussian distribution over $\delta\X_t\in\RR^{15}$, we keep the \emph{prior conditional}
$p(\delta\X_t\mid \boldsymbol{z}_t)$ and replace only the marginal over $\boldsymbol{z}_t$.
This preserves the prior cross-correlations between constrained and unconstrained components while updating
uncertainty only in the directions informed by the coverage constraint.
Then the lifted Gaussian posterior has moments:
\begin{align}
\mean_t^+ &= \widebar{\mean}_t + \boldsymbol{K}_t(\mean_{z,t}^+ - \widebar{\mean}_{z,t}), \label{eq:lift_mean}\\
\cov_t^+  &= \widebar{\cov}_t + \boldsymbol{K}_t(\cov_{z,t}^+ - \boldsymbol{S}_t)\boldsymbol{K}_t^\top.
\label{eq:lift_cov}
\end{align}
where 
$\boldsymbol{S}_t = \boldsymbol{H}\widebar{\cov}_t\boldsymbol{H}^\top$ and $\boldsymbol{K}_t = \widebar{\cov}_t\boldsymbol{H}^\top \boldsymbol{S}_t^{-1}$.
Algorithm~\ref{alg:coverage_update_short} summarizes the main steps of this implementation.

\begin{remark}\label{rem:interpretation_of_lift}
Equation~\eqref{eq:lift_cov} admits a useful interpretation. If $\cov_{z,t}^+=\boldsymbol{0}$, then
\eqref{eq:lift_cov} reduces to the standard Kalman covariance update corresponding to an exact (noise-free)
observation of $\boldsymbol{z}_t$. For $\cov_{z,t}^+\succcurlyeq \boldsymbol{0}$, the term
$\boldsymbol{K}_t\cov_{z,t}^+\boldsymbol{K}_t^\top$ re-injects residual uncertainty along the output
directions, reflecting that the set-coverage statement specifies a set-probability condition rather than a
noise-free measurement.
\end{remark}

\begin{algorithm}[t]
\caption{Coverage-Constrained Measurement Update}
\label{alg:coverage_update_short}
\begin{algorithmic}[1]
\Require Prior state $\widebar{\X}_t$, prior error moments $(\widebar{\mean}_t,\widebar{\cov}_t)$, pseudo-measurement $\ysMeas_t$, bounds $\epsInt$, confidence $\gamma$
\Ensure Updated state $\X_t^+$, updated error covariance $\cov_t^+$
\State Form $\boldsymbol{l}_t,\boldsymbol{u}_t$, $\boldsymbol{H}$, and $C_t$ via \eqref{eq:setup_feasible_set_lie_linear_augmented}.
\State Project prior to constraint space using \eqref{eq:output_space_prior}.
\State Compute $\pi_t=\PP(\boldsymbol{z}_t\in C_t^{z})$ and truncated moments in $\boldsymbol{z}$-space with \cite{Genz2016SIP}.
\If{$\pi_t \ge \gamma$}
    \State $\mean_t^+ \gets \widebar{\mean}_t,\quad \cov_t^+ \gets \widebar{\cov}_t,\quad \X_t^+ \gets \widebar{\X}_t$
    \State \Return $\X_t^+,\cov_t^+$
\EndIf
\State Moment match in $\boldsymbol{z}$-space to obtain $(\mean_{z,t}^+,\cov_{z,t}^+)$ via \eqref{eq:moment_matching}.
\State Lift to full space using \eqref{eq:lift_mean}--\eqref{eq:lift_cov} to get $(\mean_t^+,\cov_t^+)$.
\State On-manifold update: $\X_t^+ \gets \widebar{\X}_t \boxplus \mean_t^+$.
\State \Return $\X_t^+,\cov_t^+$
\end{algorithmic}
\end{algorithm}

\section{Monte-Carlo Simulations}
We validate the proposed set-coverage measurement update in Monte Carlo simulations. The objectives are to:
(i) characterize the runtime--accuracy trade-off of estimating the truncated probability mass and moments required by the update, and
(ii) compare filter behavior against a standard \ac{IEKF} \cite{Hartley2020IJRR} correction under both correctly specified Gaussian noise and deliberately misspecified non-Gaussian noise.

Theorem~\ref{thm:kl_projection_error_state} and the moment matching in \eqref{eq:moment_matching} require prior inset probability mass \eqref{eq:pi_def_error_state} $\pi_t$ and truncated moments \eqref{eq:trunc_moments_error_state} under the prior $\boldsymbol{z}_t\sim\mathcal{N}(\widebar{\mean}_{z,t},\widebar{\cov}_{z,t})$. We approximate these integrals with a randomized (quasi-)Monte Carlo estimator for Gaussian box probabilities and moments (cf.\ \cite{Genz2016SIP}). We compare accuracy with absolute error in $\pi_t$, $\ell_2$ error of the truncated mean, and Frobenius norm of the truncated covariance, each relative to a $10^7$-sample reference. 
Tab.~\ref{tab:mc_timing_error} shows the resulting runtime--accuracy trade-off on a Laptop i9-13950HX CPU. We use $N=1000$ samples in all subsequent experiments, which yields sub-millisecond cost ($\approx0.07$\,ms/update) with sufficiently small errors.

\begin{table}[t]
\centering
\caption{Runtime--accuracy for truncated-moment computations. %
}
\label{tab:mc_timing_error}
\vspace{-8pt}
\setlength{\tabcolsep}{4.5pt}
\renewcommand{\arraystretch}{1.12}
\begin{tabular}{rcccc}
\hline
\textbf{Samples} & \textbf{Timing (ms)} & \textbf{Prob. Error} & \textbf{Mean Error} & \textbf{Cov. Error} \\
\hline
$100$        & 0.0125 & 3.12e-4 & 1.03e-1   & 2.32e-1   \\
$500$        & 0.0372 & 8.07e-5 & 2.10e-2  & 6.72e-2  \\
$\mathbf{1000}$      & 0.0664 & 2.00e-5 & 1.48e-2  & 3.14e-2  \\
$5000$      & 0.302  & 5.33e-6 & 3.46e-3 & 6.46e-3 \\
$10000$     & 0.532  & 1.06e-6 & 2.06e-3 & 3.20e-3 \\
\hline
(Ref.) $1\mathrm{e}7$ & 818.23 & 0 & 0 & 0 \\
\hline
\end{tabular}
\end{table}

To set up the comparison, we simulate a rigid body following a quadrupedal base-motion trajectory and generate ideal IMU and body-frame velocity outputs consistent with \eqref{eq:estimator_dynamics}--\eqref{eq:setup_meas_model}. IMU signals are corrupted with known fixed biases and additive white noise. The body-frame velocity pseudo-measurement is corrupted by an additive error process $\eout_t$. 

We compare a standard \ac{IEKF} correction that assumes $\eout_t$ is zero-mean Gaussian with covariance $\boldsymbol{R}$, against the proposed set-coverage update in $\boldsymbol{z}$-space (Section~\ref{sec:reduced_dim_update}). The set-coverage update uses $N=1000$ samples to estimate $\pi_t$ and truncated moments, followed by the lift to the error-state space. For this update, the bounds $(\boldsymbol{l}_t,\boldsymbol{u}_t)$ are formed from the calibrated elementwise radius $\epsInt$ as in \eqref{eq:setup_feasible_set_lie_linear_augmented}, set from the $\gamma$-quantiles of the noise used to generate measurements.

We evaluate these methods under two specific noise regimes. In the unbiased Gaussian regime, $\eout_t \sim \mathcal{N}(\boldsymbol{0},\boldsymbol{R}_\star)$ and the \ac{IEKF} uses $\boldsymbol{R}=\boldsymbol{R}_\star$. This isolates the performance cost of replacing a full likelihood with a weaker set-coverage statement (Remark~\ref{rem:weakness_of_coverage}). In the misspecified non-Gaussian regime, we generate $\eout_t$ from a Gaussian mixture with nonzero component means. For each Monte Carlo trial, we draw one mixture component and keep it fixed over the entire trajectory. The \ac{IEKF} uses a single Gaussian covariance $\boldsymbol{R}$ fitted to the mixture, while the coverage bounds are set from outer quantiles. This produces persistent, trial-dependent bias relative to the \ac{IEKF}'s assumed zero-mean Gaussian model.

Tab.~\ref{tab:unbiased_biased_combined} reports position RMSE and position NEES ($\mathrm{NEES}_{\mathrm{pos}}=3$ for 3D position, ideally). In the well-specified Gaussian regime, the IEKF achieves the best accuracy and near-nominal consistency, whereas the set-coverage update has higher RMSE because it only exploits mass in the set rather than the full Gaussian shape. However, in the misspecified regime, the \ac{IEKF} becomes severely inconsistent with a large NEES due to overconfident corrections under an incorrect likelihood model. In contrast, the set-coverage update maintains near-nominal consistency across $\gamma$ while remaining competitive in RMSE. Operationally, this benefit comes from re-injecting uncertainty along the constrained directions (Remark~\ref{rem:interpretation_of_lift}). Ultimately, the set-coverage update is most valuable when pseudo-measurement errors are misspecified, such as being non-Gaussian or biased over a trajectory, where it improves consistency by mitigating overconfident corrections.

\begin{remark}
The moment-matched Gaussian approximation does not, in general, preserve the exact set-mass constraint, so the post-approximation probability mass in the feasible set need not equal $\gamma$.
Empirically, we observe that the update reliably increases the feasible-set mass toward the target (typically $\pi_t < \pi_t^+ \leq \gamma$).
While one could iterate the projection/moment-matching to approach $\gamma$ more tightly, we find that at high update rates a single update per timestep is sufficient in practice. A formal analysis of this behavior is left for the future work.
\end{remark}

\begin{table}[t]
\caption{Unbiased/Biased noise Monte Carlo comparison.}
\label{tab:unbiased_biased_combined}
\vspace{-8pt}
\centering
\resizebox{\columnwidth}{!}{%
\begin{tabular}{c c c}
\hline
$\gamma$ & RMSE (m) (unbiased/biased) & NEES$_{\mathrm{pos}}$ (unbiased/biased) \\
\hline
\multicolumn{3}{c}{\textbf{EKF update}} \\
\hline
-- & $0.6133 \pm 0.2546\,/\,2.3298 \pm 1.4016$ & $3.1139 \pm 2.5023\,/\,99.8873 \pm 121.1403$ \\
\hline
\multicolumn{3}{c}{\textbf{Set-coverage update}} \\
\hline
0.70 & $0.9063 \pm 0.3111\,/\,2.9453 \pm 1.0126$ & $2.5220 \pm 1.1528\,/\,1.8621 \pm 1.3769$ \\
0.75 & $0.8500 \pm 0.2590\,/\,2.8397 \pm 1.2102$ & $3.5934 \pm 2.4633\,/\,3.0924 \pm 3.6713$ \\
0.80 & $0.9402 \pm 0.2702\,/\,2.6030 \pm 1.3559$ & $4.8568 \pm 2.4530\,/\,1.9772 \pm 2.6391$ \\
0.85 & $0.8600 \pm 0.3257\,/\,3.0532 \pm 1.3384$ & $4.4314 \pm 4.3258\,/\,3.7381 \pm 5.5225$ \\
0.90 & $0.9450 \pm 0.2910\,/\,2.6683 \pm 1.1898$ & $4.6952 \pm 2.8569\,/\,4.0185 \pm 5.2741$ \\
0.95 & $1.0468 \pm 0.3559\,/\,3.3612 \pm 1.4733$ & $3.8479 \pm 2.4304\,/\,3.7676 \pm 4.6494$ \\
\hline
\end{tabular}%
}
\end{table}

\section{Real-World Experiments}
We further evaluate the proposed set-coverage update on two real-world quadruped datasets (Vision60 and Spot) using the learned pseudo-measurement model (see Section~\ref{sec:learned_measurement}). We compare our approach against two baselines: (i) a contact-aided leg kinematics-based \ac{IEKF}~\cite{Hartley2020IJRR}, and (ii) a learned pseudo-measurement model that outputs mean and covariance~\cite{Russell2022TNNLS} with \ac{IEKF} update~\cite{Youm2025ICRA}. Additionally, for the Spot dataset we compare against a \textit{perceptive} radar-based odometry baseline GaRLILEO~\cite{Noh2025arXiv}.

\subsection{Calibration and Coverage Statements}\label{sec:calibration_and_coverage_experiments}
To quantify uncertainty in the pseudo-measurements, we utilize a separate calibration dataset $\mathcal{E}_{\mathrm{cal}} = \{\eout_{t_k}\}_{k=0}^{T_{\text{Seq.}}}$ of measurement errors from the trained network $\netwht$. Since the data-generating process is a dynamical system, samples are temporally dependent. 
To address this complication in practice, we assume the error process is $\beta$-mixing~\cite{McDonald2015EJS}: 
\begin{assumption}[$\beta$-mixing]\label{asm:setup_mixing}
The error process $\{\eout_t\}_{t\geq 0}$ is stationary and $\beta$-mixing with coefficient $\beta(k) \to 0$ as $k\! \to \!\infty$.
\end{assumption}
Assumption~\ref{asm:setup_mixing} allows for approximately independent calibration samples by subsampling at intervals of $K$~\cite{McDonald2015EJS}. This yields the subsampled set
$\mathcal{E}_{\mathrm{cal}}^{\text{Sub.}} = \{\eout_{t_i}\}_{i=0}^{N_{\text{cal}}}$,
where $t_i = i K \Delta t$ and $N_{\text{cal}} = T_{\text{Seq.}}/K$.
We apply split-conformal prediction~\cite{Shafer2008JMLR,Oliveira2024JMLR} using the absolute error in each direction as the score function:
$s_{j,i} = |e_{j,t_i}|$ for $j \in \{x,y,z\}$.
To obtain a \emph{joint} confidence level $\gamma$ for the 3D error vector, we target the per-axis confidence
$\tilde{\gamma} = \gamma^{1/3}$, and set the corresponding significance level $\tilde{\alpha}=1-\tilde{\gamma}$.
Using the standard finite-sample correction, we compute the conformal quantile level based on
$k=\lceil (N_{\text{cal}}+1)(1-\tilde{\alpha})\rceil$~\cite{Shafer2008JMLR,Oliveira2024JMLR},
yielding thresholds $\epsInt = [\epsilon_x,\epsilon_y,\epsilon_z]^\top$ such that
$\PP(|e_{j,t}| \le \epsilon_j) \ge \tilde{\gamma}$ for each $j$.
Assuming independence across coordinates, this implies the joint coverage
$\PP(|\eout_t| \le \epsInt) = \prod_{j\in\{x,y,z\}} \PP(|e_{j,t}| \le \epsilon_j) \ge \tilde{\gamma}^3 = \gamma$.
The resulting constraints on $\eout_t$ define the feasible set $C_t$~\eqref{eq:setup_feasible_set_lie_linear_augmented}
for the coverage-constrained update.

\subsection{Estimation Performance}
\begin{figure}[t]
    \centering
    \includegraphics[width=\linewidth]{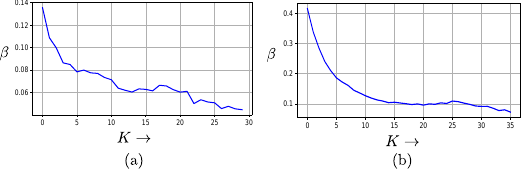}
    \caption{\hspace{-5pt}$\beta$-mixing coefficients for  Vision60 (a) and Spot (b) calibration data.}
    \label{fig:mixing_coeff}
\end{figure}
\begin{figure}
    \centering
    \includegraphics[width=\linewidth]{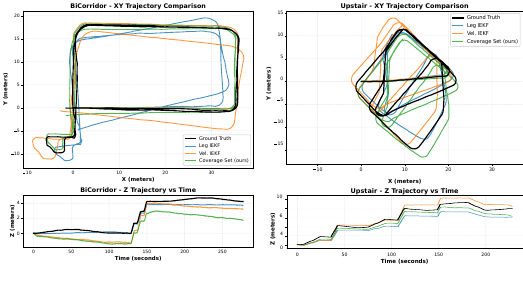}
    \caption{Two of the representative trajectories of the Spot dataset and
with comparison with the baselines.}
    \label{fig:spot_trajectories}
\end{figure}
\begin{figure}
        \centering
        \includegraphics[width=\linewidth]{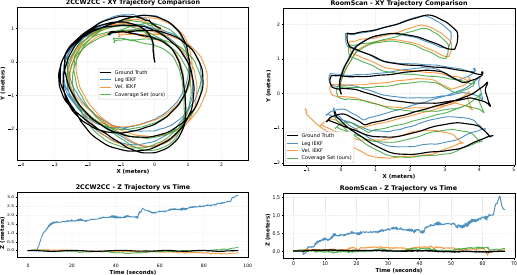}
        \caption{Two of the representative trajectories of the Vision60 dataset and with comparison with the baselines.}
        \label{fig:vision60_trajectories}
\end{figure}

Both real-world datasets pose distinct challenges. Vision60 is recorded in a motion-capture room at $100\,\mathrm{Hz}$ using onboard \ac{IMU} and joint measurements. The trajectories are short and confined to a single room, but include sharp turns and walking over slippery sheets. In contrast, the Spot dataset~\cite{Noh2025arXiv} covers diverse indoor and outdoor terrain with elevation changes. Sequences last multiple minutes and span hundreds of meters, which is particularly challenging for proprioceptive-only odometry.

We set the target coverage level to $\gamma = 0.8$, as we did not observe significant performance difference for $\gamma \in [0.7, 0.85]$ (see Tab.~\ref{tab:unbiased_biased_combined}). From the $\beta$-mixing decay in Fig.~\ref{fig:mixing_coeff}, we choose subsampling intervals $K=20$ for Vision60 and $K=35$ for Spot to approximately satisfy the conformal independence requirement with $\beta\approx 0.05$. This yields calibrated coverage bounds $\epsInt=[0.05,\,0.2,\,0.05]^\top$ on Vision60 using the \textsc{RoomScan} sequence and $\epsInt=[0.22,\,0.05,\,0.1]^\top$ on Spot using the \textsc{Quad} sequence.

Tables~\ref{tab:ghost_dataset_results} and~\ref{tab:spot_dataset_results} report the absolute and relative RMSE for both datasets.  Overall, the proposed set-coverage update remains robust and accurate. On Spot dataset, proprioception-only baselines accumulate substantial drift during sharp turns and abrupt elevation changes, especially in \textsc{Downstair} and \textsc{Overpass}. In these cases, our estimator remains accurate and stable. See Fig.~\ref{fig:spot_trajectories} for representative trajectories. Relative to the \textit{perception-based} GaRLILEO, our \textit{proprioception-only} method achieves similar accuracy and even exceeds it on \textsc{Quad} sequence; GaRLILEO results are taken from~\cite{Noh2025arXiv}. 
Similarly, on the Vision60, the standard Leg \ac{IEKF} exhibits significant vertical drift and high position RMSE (Fig.~\ref{fig:vision60_trajectories}). While learned pseudo-measurements can correct this on flat ground, our method yields the lowest absolute position errors overall amongst the baselines. Overall, by effectively handling non-Gaussian noise, our approach outperforms existing proprioceptive methods while maintaining competitiveness with perception-driven baselines.

\begin{table}[htbp]
\centering
\caption{Results on Spot Dataset}
\label{tab:spot_dataset_results}
\vspace{-8pt}
\scriptsize
\resizebox{\columnwidth}{!}{
\begin{tabular}{l|lccccc}
\toprule
Sequence & Method & APE$_{\text{trans}}$ & RPE$_{\text{trans}}$ & APE$_{\text{rot}}$ & RPE$_{\text{rot}}$ & $\pi<\gamma$ \\
        &        & [m]               & [m]               & [deg]              & [deg/m]            & [\%]         \\
\midrule
\textbf{Atrium}      & GaRLILEO   & 0.816    & 0.055    & 1.715  & 0.554 & --   \\
109.93 m    & Leg IEKF  & 2.267  & 0.108  & 6.457  & 0.908 & --   \\
124.50 s    & Vel. IEKF & 0.453  & 0.037  & 2.566  & 1.494 & --   \\
            & Ours      & 1.094  & 0.055  & 2.075  & 1.746 & 69.7 \\
\midrule
\textbf{BiCorridor}  & GaRLILEO   & 1.425    & 0.063    & 5.519  & 0.885 & --   \\
240.82 m    & Leg IEKF  & 3.251  & 0.141  & 9.130  & 1.413 & --   \\
277.29 s    & Vel. IEKF & 2.236  & 0.110  & 4.118  & 2.436 & --   \\
            & Ours      & 2.526  & 0.074  & 2.658  & 1.955 & 70.0 \\
\midrule
BridgeLoop  & GaRLILEO   & 1.193    & 0.080    & 2.719  & 1.058 & --   \\
161.17 m    & Leg IEKF  & 1.235  & 0.176  & 6.479  & 3.439 & --   \\
187.20 s    & Vel. IEKF & 3.782  & 0.151  & 5.027  & 2.742 & --   \\
            & Ours      & 1.421  & 0.075  & 2.228  & 2.307 & 73.3 \\
\midrule
\textbf{CorriLoop}   & GaRLILEO   & 1.627    & 0.066    & 5.676  & 0.738 & --   \\
208.68 m    & Leg IEKF  & 3.302  & 0.116  & 10.838 & 1.133 & --   \\
229.40 s    & Vel. IEKF & 1.198  & 0.082  & 4.402  & 1.986 & --   \\
            & Ours      & 2.223  & 0.068  & 4.170  & 2.280 & 74.0 \\
\midrule
\textbf{Downstair}   & GaRLILEO   & 3.916    & 0.099    & 3.415  & 1.080 & --   \\
233.75 m    & Leg IEKF  & 8.658  & 0.135  & 9.919  & 1.261 & --   \\
270.90 s    & Vel. IEKF & 45.689 & 0.875  & 3.526  & 2.639 & --   \\
            & Ours      & 6.832  & 0.106  & 9.541  & 2.089 & 70.2 \\
\midrule
\textbf{Overpass}    & GaRLILEO   & 1.526    & 0.091    & 4.043  & 1.227 & --   \\
169.17 m    & Leg IEKF  & 43.267 & 1.522  & 27.234 & 5.870 & --   \\
213.49 s    & Vel. IEKF & 40.923 & 0.846  & 4.870  & 3.367 & --   \\
            & Ours      & 2.897  & 0.054  & 4.866  & 1.763 & 70.6 \\
\midrule
\textbf{Quad}        & GaRLILEO   & 7.347    & 0.080    & 3.356  & 0.838 & --   \\
447.83 m    & Leg IEKF  & 27.624 & 0.111  & 22.735 & 0.902 & --   \\
503.69 s    & Vel. IEKF & 8.648  & 0.083  & 5.375  & 1.889 & --   \\
            & Ours      & 3.180  & 0.065  & 4.186  & 1.826 & 70.0 \\
\midrule
\textbf{SlopeStair}  & GaRLILEO   & 2.359    & 0.061    & 2.805  & 1.051 & --   \\
273.37 m    & Leg IEKF  & 13.382 & 0.118  & 20.625 & 1.203 & --   \\
307.49 s    & Vel. IEKF & 5.221  & 0.113  & 4.339  & 2.583 & --   \\
            & Ours      & 3.058  & 0.079  & 4.559  & 2.046 & 73.7 \\
\midrule
\textbf{Tunnel}      & GaRLILEO   & 3.523    & 0.083    & 2.849  & 0.440 & --   \\
247.94 m    & Leg IEKF  & 8.559  & 0.102  & 9.761  & 0.822 & --   \\
277.00 s    & Vel. IEKF & 5.193  & 0.081  & 5.105  & 1.712 & --   \\
            & Ours      & 5.154  & 0.050  & 3.961  & 1.474 & 70.7 \\
\midrule
\textbf{Upstair}     & GaRLILEO   & 1.496    & 0.071    & 4.048  & 0.933 & --   \\
197.22 m    & Leg IEKF  & 2.005  & 0.171  & 8.068  & 2.885 & --   \\
227.89 s    & Vel. IEKF & 2.132  & 0.151  & 3.161  & 2.776 & --   \\
            & Ours      & 2.042  & 0.072  & 5.037  & 2.441 & 71.2 \\
\midrule
\textbf{Average} & GaRLILEO   & 2.523  & 0.075  & 3.615  & 0.880 & -- \\
                 & Leg IEKF  & 11.355 & 0.270  & 12.775 & 1.984 & -- \\
                 & Vel. IEKF & 11.551 & 0.244  & 4.052  & 2.362 & -- \\
                 & \textbf{Ours} & \textbf{2.736} & \textbf{0.070} & \textbf{4.139} & \textbf{2.003} & \textbf{71.3} \\
\bottomrule
\end{tabular}
}
\vspace{-10pt}
\end{table}

\begin{table}[htbp]
\centering
\caption{Results on Vision60 Dataset}
\label{tab:ghost_dataset_results}
\vspace{-8pt}
\scriptsize
\resizebox{\columnwidth}{!}{
\begin{tabular}{l|lccccc}
\toprule
Sequence & Method & APE$_{\text{trans}}$ & RPE$_{\text{trans}}$ & APE$_{\text{rot}}$ & RPE$_{\text{rot}}$ & $\pi<\gamma$ \\
        &        & [m]               & [m]               & [deg]              & [deg/m]            & [\%]         \\
\midrule
\textbf{2CC2CCW}      & Leg IEKF & 1.344 & 1.419 & 1.622 & 0.280 & -- \\
53.15 m      & Vel. IEKF   & 0.349 & 1.545 & 1.953 & 0.308 & -- \\
76.3 s       & Ours       & 0.278 & 1.506 & 1.225 & 0.299 & 95.3 \\
\midrule
\textbf{2CCW2CC}      & Leg IEKF & 2.095 & 1.355 & 0.893 & 0.290 & -- \\
51.97 m      & Vel. IEKF   & 0.244 & 1.486 & 1.483 & 0.316 & -- \\
95.57 s      & Ours       & 0.226 & 1.492 & 1.119 & 0.315 & 95.7 \\
\midrule
\textbf{3CC3CCW}      & Leg IEKF & 0.756 & 1.375 & 0.940 & 0.282 & -- \\
74.65 m      & Vel. IEKF   & 0.470 & 1.536 & 3.137 & 0.312 & -- \\
107.96 s     & Ours       & 0.667 & 1.548 & 3.083 & 0.312 & 95.3 \\
\midrule
\textbf{3CCW3CC}      & Leg IEKF & 0.901 & 1.382 & 0.869 & 0.287 & -- \\
83.42 m      & Vel. IEKF   & 0.605 & 1.519 & 5.216 & 0.321 & -- \\
129.93 s     & Ours       & 0.363 & 1.520 & 1.987 & 0.315 & 95.1 \\
\midrule
\textbf{RoomScan}     & Leg IEKF & 0.690 & 1.312 & 1.140 & 0.424 & -- \\
38.96 m      & Vel. IEKF   & 0.290 & 1.444 & 1.428 & 0.464 & -- \\
66.97 s      & Ours       & 0.300 & 1.428 & 1.356 & 0.465 & 95.3 \\
\midrule
\textbf{Slipped}      & Leg IEKF & 0.354 & 1.312 & 0.707 & 0.323 & -- \\
19.01 m      & Vel. IEKF   & 0.246 & 1.499 & 1.969 & 0.365 & -- \\
43.03 s      & Ours       & 0.261 & 1.456 & 1.044 & 0.356 & 95.5 \\
\midrule
\textbf{Stroll}       & Leg IEKF & 1.770 & 1.287 & 2.427 & 0.504 & -- \\
99.21 m      & Vel. IEKF   & 0.760 & 1.505 & 2.793 & 0.525 & -- \\
162.3 s      & Ours       & 0.717 & 1.504 & 2.912 & 0.540 & 94.6 \\
\midrule
\textbf{Average} & Leg IEKF & 1.130 & 1.349 & 1.228 & 0.341 & -- \\
                 & Vel. IEKF   & 0.424 & 1.503 & 2.568 & 0.373 & -- \\
                 & \textbf{Ours} & \textbf{0.402} & \textbf{1.493} & \textbf{1.818} & \textbf{0.372} & 95.26 \\
\bottomrule
\end{tabular}
}
\vspace{-10pt}
\end{table}

\section{Conclusion and Future Work}
We proposed a proprioception-only state estimation framework for legged robots that uses a learned measurement model mapping histories of joint-level measurements to body velocity estimates. Under limited data, the error distribution of these predictions is not Gaussian. We systematically characterize this arbitrary error distribution with a set-coverage statement. This set-coverage statement is then used to update the Gaussian state estimate via KL-divergence and moment matching in a computationally efficient way. We compared our method with a baseline in both simulation and a real-world quadrupedal robot dataset after obtaining calibrated set-coverage statements. Our estimator is competitive with both proprioceptive-only and perception baselines in the nominal Gaussian-noise case and remains consistent and robust in the arbitrary-noise case, whereas baselines do not. 
Our future work includes extending the proposed legged state estimator with exteroceptive measurements such as vision.
\appendices

\section{Proof of Theorem \ref{thm:kl_projection_error_state}}
\label{app:proof}

The optimization in \eqref{eq:kl_projection_error_state} is strictly convex in $\bel(\delta\X)$. If $\pi_t \geq \gamma$, the unconstrained minimizer $\belOpt_t = \belPri_t$ (which yields a KL divergence of zero) satisfies all constraints and is therefore uniquely optimal.If $\pi_t < \gamma$, the coverage constraint is active. Introducing dual variables $\lambda > 0$ for the inequality constraint and $\nu$ for the normalization constraint, the KKT conditions necessitate that the first variational derivative of the Lagrangian vanishes:
\begin{equation*}
\frac{\delta \mathcal{L}}{\delta \bel(\delta\X)} = \ln \left( \frac{\belOpt_t(\delta\X)}{\belPri_t(\delta\X)} \right) + 1 - \lambda \Idx_{C_t}(\delta\X) + \nu = 0
\end{equation*}
Solving for $\belOpt_t(\delta\X)$ restricts the optimal solution to the form:
\begin{equation*}
\belOpt_t(\delta\X) = \belPri_t(\delta\X) \exp \left( \lambda \Idx_{C_t}(\delta\X) - \nu - 1 \right)
\end{equation*}
This indicates $\belOpt_t(\delta\X)$ is a piecewise scaling of the prior: $\belOpt_t(\delta\X) = k_1 \belPri_t(\delta\X)$ for $\delta\X \in C_t$, and $\belOpt_t(\delta\X) = k_2 \belPri_t(\delta\X)$ for $\delta\X \in C_t^c$. Enforcing the active coverage constraint $\int_{C_t} \belOpt_t(\delta\X) d\delta\X = \gamma$ using \eqref{eq:pi_def_error_state} directly yields $k_1 \pi_t = \gamma$. Applying the normalization constraint $\int \belOpt_t(\delta\X) d\delta\X = 1$ requires the remaining mass to satisfy $k_2 (1 - \pi_t) = 1 - \gamma$. Solving for the scaling factors $k_1, k_2$ recovers \eqref{eq:posterior_piecewise}. The non-negativity constraint is inherently satisfied by the exponential form.

\bibliographystyle{IEEEtran}
\bibliography{IEEEabrv,references}

\end{document}